\DeclareMathOperator*{\argmax}{arg\,max}
\newcommand{\tr}{{{\mathsf T}}}
\newtheorem{proposition}{Proposition}
\newtheorem{remark}{Remark}
\newcommand{\BibTeX}{B\kern-.05em{\sc i\kern-.025em b}\kern-.08em\TeX}
\let\oldequation\equation
\let\oldendequation\endequation
\renewenvironment{equation}
{\linenomathNonumbers\oldequation}
{\oldendequation\endlinenomath}
\let\oldsubequation\subequations
\let\oldendsubequation\endsubequations
\renewenvironment{subequations}
{\linenomathNonumbers\oldsubequation}
{\oldendsubequation\endlinenomath}
\let\oldmultline\multline
\let\oldendmultline\endmultline
\renewenvironment{multline}
{\linenomathNonumbers\oldmultline}
{\oldendmultline\endlinenomath}
\begin{document}


\begin{frontmatter}


\paperid{1462}


\title{Verification of Geometric Robustness of Neural Networks via Piecewise Linear Approximation and Lipschitz Optimisation}


\author[A]{\fnms{Ben}~\snm{Batten}\thanks{Corresponding author. Email: b.batten@imperial.ac.uk.}}
\author[B]{\fnms{Yang}~\snm{Zheng}}
\author[C]{\fnms{Alessandro}~\snm{De Palma}}
\author[E,D]{\fnms{Panagiotis}~\snm{Kouvaros}}
\author[A,E]{\fnms{Alessio}~\snm{Lomuscio}}

\address[A]{Imperial College London, UK}
\address[B]{University of California San Diego, USA}
\address[C]{Inria, École Normale Supérieure, PSL University, CNRS, France}
\address[D]{Department of Information Technologies, University of Limassol, Cyprus}
\address[E]{Safe Intelligence, UK}


\begin{abstract}
    We address the problem of verifying neural networks against
    geometric transformations of the input image, including rotation,
    scaling, shearing, and translation. The proposed method computes
    provably sound piecewise linear constraints for the pixel values
    by using sampling and linear approximations in combination with
    branch-and-bound Lipschitz optimisation. The method
    obtains provably tighter over-approximations of the perturbation
    region than the present state-of-the-art.
We report results from
    experiments on a comprehensive set of verification benchmarks on MNIST and CIFAR10.
We show that
    our proposed implementation resolves up to 32\% more verification cases than
    present approaches.
\end{abstract}

\end{frontmatter}


\section{Introduction}
Neural networks as used in mainstream applications, including computer
vision, are known to be fragile and susceptible to adversarial
attacks~\cite{goodfellow2014explaining}. The area of \emph{formal
verification of neural networks} is concerned with the development of
methods to establish whether a neural network is \emph{robust}, with
respect to its classification output, to variations of the image.
A large body of literature has so far focused on norm-bounded input
perturbations, aiming to demonstrate that imperceptible adversarial alterations of the pixels cannot alter the classifier's
classification ($\ell_p$ robustness). 
In safety-critical applications such as autonomous driving, however, resistance to norm-bounded perturbations is inadequate to guarantee safe deployment. 
In fact, image classifiers need to be
robust against a number of variations of the image, including
contrast, luminosity, hue, and beyond. 
A particularly important class
of specifications concerns robustness to geometric
perturbations of the input
image~\cite{KouvarosLomuscio18,mohapatra2020towards,singh2019abstract,balunovic2019certifying}. These
may include translation, shearing, scaling, and rotation. 

Owing to the highly nonlinear variations of the
pixels in geometric transformations, verifying robustness to these perturbations
is intrinsically a much harder problem than $\ell_p$ robustness.
Previous work over-approximates these variations through hyper-rectangles~\cite{singh2019abstract} or pairs of linear bounds over the pixel values~\cite{balunovic2019certifying}, 
hence failing to capture most of the complexity of the perturbation region.
Developing more precise methods for verifying geometric
robustness remains an open challenge. In this paper we work towards this end. Specifically, we make
three contributions:
\begin{enumerate}
\item
We present a piecewise linear relaxation method to approximate the set
of images generated by geometric transformations, including rotation,
translation, scaling, and shearing.  This construction can incorporate
previous approaches~\cite{singh2019abstract,balunovic2019certifying}
as special cases while supporting additional constraints, allowing
significantly tighter over-approximations of the perturbation region.

\item We show that sound piecewise linear constraints, the building blocks of the proposed relaxation, can be generated 
via suitable modifications of a previous
approach~\citep{balunovic2019certifying} that generates linear
constraints using sampling, linear and Lipschitz optimisation.  We
derive formal results as well as effective heuristics that enable us
to improve the efficiency of the linear and Lipschitz optimisations in
this context
(cf. Propositions~\ref{prop:linear_optimisation}---\ref{proposition:gradient}). As
we demonstrate, the resulting piecewise constraints can be readily
used within existing tight neural network verifiers.

\item We introduce an efficient implementation for the verification method
above and discuss experimental results showing considerable gains in
terms of verification~accuracy on a comprehensive set of benchmark networks.
\end{enumerate}

The rest of this paper is organized as follows: Section~\ref{sec:related-work} discusses related work. In Section~\ref{section:geometric_robustness} we
introduce the problem of verifying neural networks against geometric
robustness properties. In Section~\ref{section:pw_linar_bounds} we present our
novel piecewise linear approximation strategy via sampling,
optimisation and shifting. In Section~\ref{section:experiments} we discuss
the experimental results obtained and contrast the present method
against the state-of-the-art on benchmark networks. We
conclude in Section~\ref{section:conclusions}. Our code is publicly available on GitHub\footnote{\url{https://github.com/benbatten/PWL-Geometric-Verification}}.

\section{Related Work} \label{sec:related-work}

We here briefly discuss related work from $l_p$-based
 neural network verification, geometric robustness and
 formal verification thereof.

\paragraph{$\ell_p$ robustness verification.}
There is a rich body of work on the verification of neural networks against $\ell_p$-bounded perturbations:
see, e.g.,~\cite{Liu+20a} for a survey. 
Neural network verifiers typically rely on Mixed-Integer Linear Programming (MILP)~\cite{botoeva2020efficient,tjeng2017evaluating}, branch-and-bound~\citep{bunel2020branch,BunelDP20,DePalma2021,xu2021fast,betacrown,HenriksenLomuscio21,Ferrari2022,zhang2022gcpcrown},
 or on abstract interpretation~\cite{Gehr2018ai2, singh2018fast, singh2019anabstract}.
These methods cannot be
used to certify geometric robustness out of the box, as $\ell_p$ balls are unable to accurately
represent geometric transformations~\cite{KouvarosLomuscio18,singh2019abstract}.

\paragraph{Geometric robustness.} The vulnerability of neural networks to
geometric transformations has been observed in~\cite{pmlr-v97-engstrom19a,Fawzi}. A common theme among these works is their
\textit{quantitative} nature, whereby measures of invariance to geometric
robustness are discussed~\cite{Kanbak_2018_CVPR} and
methods to improve spatial robustness are developed. These are based
on
augmentation~\cite{Xiang20,Xiao}, regularisation schemes~\cite{Yang19}, robust optimisation~\cite{pmlr-v97-engstrom19a} and
specialised, invariance-inducing network
architectures~\cite{Jaderberg}. Differently from the cited works, our
key aim here is the \textit{qualitative analysis }of networks towards
establishing formal guarantees of geometric robustness.

\paragraph{Formal verification of geometric robustness.}
One of the earliest works~\cite{Pei} on this subject discretises the
transformation domains, enabling robustness verification through the
evaluation of the model at a finite number of discretised
transformations.
In contrast to~\citet{Pei}, we here focus on continuous
domains, which do not allow exhaustive evaluation. 
Previous work on continuous domains relies on over-approximations, whereby, for each pixel,
the set of allowed values under the perturbation is replaced by a convex relaxation~\cite{KouvarosLomuscio18,singh2019abstract,balunovic2019certifying}.
In particular,~\cite{KouvarosLomuscio18}
and~\cite{singh2019abstract} use an $l_\infty$ norm ball and intervals
respectively, resulting in loose over-approximations.
\citet{balunovic2019certifying} devise more precise convex relaxations
by computing linear approximations with respect to the transformation parameters. 
In this work we further improve precision over \citet{balunovic2019certifying}
by deriving piecewise linear approximations. 
While the above works consider the geometric transformation as a whole (see Section~\ref{section:geometric_robustness}),~\citet{mohapatra2020towards} 
decompose the transformation into network layers to be pre-pended to the network under analysis, resulting in looser approximations when using standard neural network verifiers.
More recently, randomised smoothing techniques have been investigated for geometric robustness~\cite{fischer2020certified,li2021tss,fischer2021scalable}: differently from our work, these only provide probabilistic certificates.
Finally,~\citet{yang2023provable} recently presented a method to train networks more amenable to geometric robustness verification. Our work is agnostic to the training scheme: we here focus on the more challenging general case.



\section{Geometric robustness verification}
\label{section:geometric_robustness}


Our main contribution is a new piecewise linear relaxation of
geometric transformations to verify robustness of neural networks to geometric perturbations. 
We here introduce relevant notation in the
verification problem and present the geometric attack model.

\paragraph{Notation.} Given two vectors $a, b \in \mathbb{R}^n$, we use
$a \geq b$ and $a \leq b$ to represent element-wise inequalities.
Given a vector $a \in \mathbb{R}^m$ and a matrix $A \in \mathbb{R}^{m \times n}$, we denote their elements using
$a[i]$ and $A[i,j]$, respectively.

\paragraph{Neural networks for classification.} We consider a feedforward
neural network with $L$ hidden layers $f: \mathbb{R}^n \rightarrow
\mathbb{R}^m$. Let $x_0 \in \mathbb{R}^n$ denote the input and $x_i$ denotes the activation vectors at layer $i$. We use
$\mathbb{L}_i$ to denote an affine map at layer $i$, \textit{e.g.}, linear,
convolutional, and average pooling operations. Let $\sigma_i$ be an
element-wise activation function, such as ReLU, sigmoid or tanh. The
activation vectors are related by $x_{i+1} =
\sigma_i\big(\mathbb{L}_i(x_i)\big), i = 0, 1, \ldots, L-1$. We are
interested in neural networks for classification: the network output
$f(x_0) = \mathbb{L}_L(x_L)\in \mathbb{R}^m$ represents the score of
each class, and the label $i^*$ assigned to the input $x_0$ is the
class with highest score, \emph{i.e.}, $i^* =\argmax_{i=1, \ldots m} f(x_0)[i]$.

\paragraph{Robustness verification.} 
Let $\mathcal{A}$ be a general
attacker that takes a nominal input $\bar{x} \in \mathbb{R}^n$ and
returns a perturbed input $\mathcal{A}(\bar{x}) \in \mathbb{R}^n$. We
denote the attack space as $\Omega_{\epsilon}(\bar{x}) \subset
\mathbb{R}^n$, \textit{i.e.}, $\mathcal{A}(\bar{x}) \in
\Omega_{\epsilon}(\bar{x})$, where ${\epsilon} > 0$ denotes the attack
budget. 
Formally verifying that a classification neural network $f$ is \emph{robust} with respect to an input $\bar{x}$ and its attack space $\Omega_\epsilon(\bar{x})$ 
implies ensuring that all points in $\Omega_\epsilon(\bar{x})$ will share the same classification label of $\bar{x}$.
This can be done by solving the following optimisation problem $\forall \ i \neq i^*$:
\begin{subequations} \label{eq:nn_verification}
\begin{align}
    \gamma^*_i := \min_{x_0, x_1, \ldots x_L,y} &\quad  y[i^*] - y[i] \nonumber \\
    \text{subject to} & \quad x_0 \in \Omega_{\epsilon}(\bar{x}),
    \label{eq:verification_input_constraints} \\
    & \quad  x_{i+1} = \sigma_i\big(\mathbb{L}_i(x_i)\big), i \in [L]  \label{eq:verification_nn_constraints}\\
     & \quad  y = \mathbb{L}_L(x_L), \label{eq:verification_nn_output}
\end{align}
\end{subequations}
with~\eqref{eq:verification_nn_constraints} being neural network constraints,~\eqref{eq:verification_nn_output} as the neural network output,~\eqref{eq:verification_input_constraints} denoting the attack model constraint, and $ L := \{0, 1, \ldots, L-1\}$.
If $\gamma^*_i > 0\ \forall \ i \ \in \ \{1,\dots,m\}$, the network is certified to be robust.

Even when $\Omega_\epsilon(\bar{x})$ is a convex set, such as in the case of $\ell_p$ perturbations, for which $\Omega_{\epsilon}(\bar{x}) = \{x \in \mathbb{R}^n \mid \|x -
\bar{x}\|_p \leq \epsilon \}$,
the nonconvex neural network
constraints~\eqref{eq:verification_nn_constraints}
make the verification problem~\eqref{eq:nn_verification} difficult to solve. 
However, in this setting, tractable lower bounds  $\underline{\gamma}^*_i \leq \gamma^*_i$ on the solution can be obtained through a variety of techniques, including: 
linear relaxations~\citep{Ehlers17,singh2019abstract,Tran20,tjandraatmadja+2020convex,Zhang2018,Salman2019}, semi-definite programming~\citep{Dathathri+2020a,Batten+21a,RaghunathanSteinhardtLiang2018,Fazlyab2020sdp} and Lagrangian duality~\citep{WongKolter17,BunelDP20,DePalma2021,Dvijotham2018,zhang2022gcpcrown}.
These techniques lie at the core of the network verifiers described in Section~\ref{sec:related-work}.
If $\underline{\gamma}^*_i > 0\ \forall \ i \ \in \ \{1,\dots,m\}$, the network is robust, but a negative lower bound will leave the property undecided, pointing to the importance of tight lower bounds.
When considering geometric transformations, the attack model constraint~\eqref{eq:verification_input_constraints} is highly nonconvex, making verification even more challenging.
%

\paragraph{Attack model via geometric transformation.} A geometric
transformation of an image is a composite function, consisting of a spatial transformation
$\mathcal{T}_\mu$, a bilinear interpolation $\mathcal{I}(u,v)$, which
handles pixels that are mapped to non-integer coordinates, and
changes in brightness and contrast $\mathcal{P}_{\alpha, \beta}$. The
spatial transformation $\mathcal{T}_{\mu}$ can be a composition of
rotation, translation, shearing, and scaling; see \textit{e.g.},~\cite{balunovic2019certifying} for detailed descriptions. The pixel value $\hat{p}_{u,v}$ at position $(u,v)$ of
the transformed image is obtained as follows: (1) the pre-image of
$(u,v)$ is calculated under $\mathcal{T}_\mu$; (2) the resulting
coordinate is interpolated via $\mathcal{I}$ to obtain a value $\xi$;
(3) $\mathcal{P}_{\alpha, \beta}(\xi) = \alpha \xi + \beta$ is applied  to
compute the final pixel value $\hat{p}_{u,v}$. In other words, we
have that $\hat{p}_{u,v} = \mathcal{G}_{u,v}(\alpha,\beta,\mu)$,
where:
\begin{equation}
	\mathcal{G}_{u,v}(\alpha,\beta,\mu) := \mathcal{P}_{\alpha, \beta} \circ \mathcal{I} \circ \mathcal{T}^{-1}_\mu(u,v).
\end{equation}
%
We consider the following standard bilinear interpolation:
\begin{equation*}
    \mathcal{I}(u,v) = \hspace{-10pt}\sum_{\delta_i, \delta_j \in \{0,1\}}\hspace{-7pt}\! p_{i+\delta_i, j + \delta_j}(1 - |i + \delta_i - u|) 
    (1 - |j + \delta_j - v|),
\end{equation*}
where $(i,j)$ denotes the lower-left corner of the interpolation
region $[i,i+1] \times [j, j+1]$ that contains pixel $(u,v)$, and the
matrix $p$ denotes the pixel values of the original image. Note that
the interpolation function $\mathcal{I}$ is continuous on
$\mathbb{R}^2$ but can be nonsmooth on the boundaries of interpolation
regions. Thus, $\mathcal{G}_{u,v}(\alpha,\beta,\mu)$ is in general
\textit{nonsmooth} with respect to the spatial parameter $\mu$ (\textit{e.g.},
rotation).

For simplicity, in the following we will denote the transformation parameter as $\kappa =(\alpha, \beta, \mu)$.
The geometric attack model assumes interval constraints on $(\alpha, \beta, \mu)$, denoted by $\mathcal{B} \subset \mathbb{R}^d$, where $d$ is the dimension of $\kappa$.
The attack space $\Omega_{\epsilon}(\bar{x})$ from~\eqref{eq:verification_input_constraints} is then defined as the set of all images resulting by the application of
$\mathcal{G}_{u,v}(\kappa)$ on each pixel $(u,v)$ of $\bar{x}$, for all $\kappa \in \mathcal{B}$.
More formally, given $\texttt{im} : \mathbb{R}^{n} \rightarrow \mathbb{R}^{h \times w} $, a mapping re-arranging images into their spatial dimensions, 
$\Omega_{\epsilon}(\bar{x}) = \{x' \in \mathbb{R}^{n} \ |\ \texttt{im}(x')[u, v] \in \Omega_{\epsilon}(\bar{x})[u, v]\}$, with 
$\Omega_{\epsilon}(\bar{x})[u, v] = \{\mathcal{G}_{u,v}(\kappa) | \ \forall \ \kappa \in \mathcal{B}\}$.

\paragraph{Problem statement.} The geometric attack model $\Omega_{\epsilon}(\bar{x})$ defines a highly nonconvex constraint on the admissible image inputs, which is not readily supported by bounding techniques designed for $\ell_p$ perturbations. 
As a result, previous work replaces it by
over-approximations~\cite{balunovic2019certifying,singh2019abstract},
which allow verification through $\ell_p$-based neural network
verifiers.  Nevertheless, as described in
Section~\ref{sec:related-work}, their over-approximations are
imprecise, resulting in loose lower bounds $\underline{\gamma}^*_i$.
In this work, we aim to derive a tighter convex relaxation of the
geometric attack model $\Omega_{\epsilon}(\bar{x})$ based on piecewise
linear constraints.  By relying on networks verifiers with support for
these constraints, we will then show that our approach leads to
effective verification bounds for~\eqref{eq:nn_verification}.


\section{Piecewise linear formulation} \label{section:pw_linar_bounds}


As mentioned above, the pixel value function
$\mathcal{G}_{u,v}(\kappa)$ at location $(u,v)$ is generally
\textit{nonlinear} and \textit{nonsmooth} with respect to the transformation parameters
$\kappa$. This is one source of difficulty for solving the
verification problem~\eqref{eq:nn_verification}. In this section, we
introduce a new convex relaxation method to derive tight
over-approximations of $\mathcal{G}_{u,v}(\kappa)$. 

\begin{figure}[t]
	\centering
    \includegraphics[width=\columnwidth]{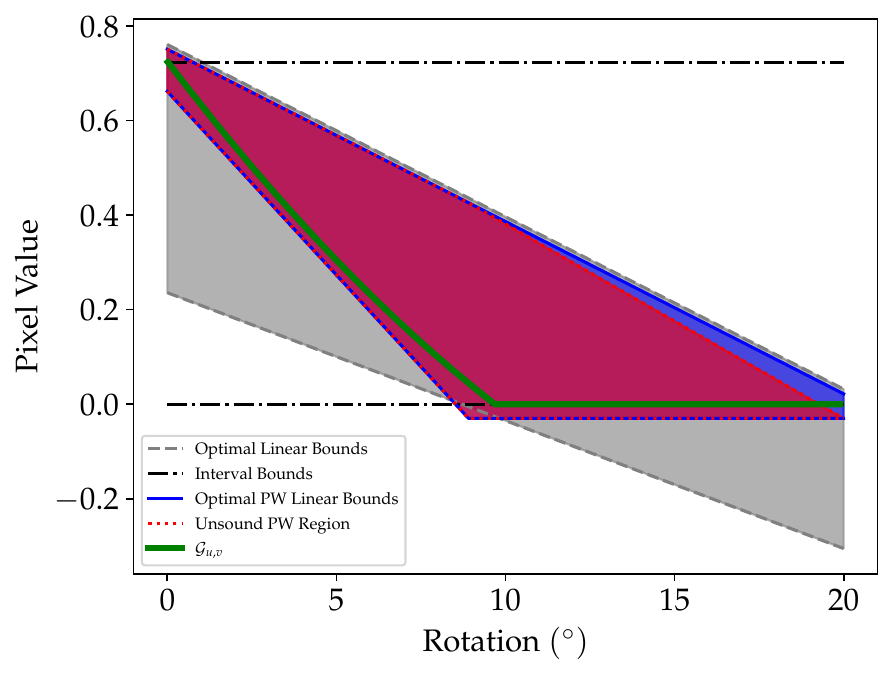}
    \caption{Comparison of sound and unsound piecewise (PW) linear domains (our
    work), sound linear domain (gray area)~\protect \cite{singh2019abstract}, and
    interval bounds (dashed line)~\protect \cite{singh2019abstract}. The true pixel
    value function (the green curve) is marked for a rotation of 18$^{\circ}$.}
\label{fig:linear_pw_comparison}
\vspace{5pt}
\end{figure}

\subsection{Piecewise linear bounds}

Deriving an interval bound for each pixel $(u,v)$, \textit{i.e.}, $L_{u,v} \leq
\mathcal{G}_{u,v}(\kappa) \leq U_{u,v}$, for all $\kappa \in
\mathcal{B}$ and lower and upper bounds $L_{u,v}, U_{u,v} \in \mathbb{R}$, is  arguably the simplest way to get a convex relaxation~\cite{singh2019abstract,KouvarosLomuscio18}. However, even a small geometric transformation can lead to a large interval bound, making this approach too loose for effective verification.

This naive interval bound approach has been extended in~\cite{balunovic2019certifying}, where linear lower and upper bounds were used for each pixel value, \textit{i.e.},
\begin{equation} \label{eq:linear_bounds}
   \underline{w}^\tr \kappa + \underline{b} \leq
   \mathcal{G}_{u,v}(\kappa) \leq \overline{w}^\tr \kappa + \overline{b}, \quad \forall \kappa \in \mathcal{B}.
\end{equation}
The linear bounds~\eqref{eq:linear_bounds}, however, can be still too
loose to approximate the nonlinear function
$\mathcal{G}_{u,v}(\kappa)$ (see Figure~\ref{fig:linear_pw_comparison} for illustration). Our key
idea is to use piecewise linear bounds to approximate the pixel
values:
\begin{equation} \label{eq:pw_linear_bounds}
   \max_{j = 1,\ldots,q} \{\underline{w}_j^\tr \kappa +
   \underline{b}_j\} \leq \mathcal{G}_{u,v}(\kappa) \leq \min_{j =
   1,\ldots,q} \{\overline{w}_j^\tr \kappa + \overline{b}_j\},
\end{equation}
$\forall \kappa \in \mathcal{B}$, where $q$ is the number of piecewise segments, $\underline{w}_j \in
\mathbb{R}^d, \underline{b}_j \in \mathbb{R}, j = 1, \ldots, q$ define
the piecewise linear lower bound, and $\overline{w}_j\in \mathbb{R}^d,
\overline{b}_j \in \mathbb{R}, j = 1, \ldots, q$ define the piecewise
linear upper bound. 
We remark that the pixel values constrained by~\eqref{eq:pw_linear_bounds} form a convex set.
Furthermore, our approach can include the strategies in~\cite{singh2019abstract,balunovic2019certifying} as special cases. 
Employing the relative constraints among the piecewise segments will result in a tighter set. 

For each pixel value, we would like to derive optimal and sound piecewise linear bounds by minimizing the approximation error. Specifically, we aim to compute the lower bound via
\begin{equation}\label{eq:bound_integral_optimisation}
    \begin{aligned}
        \min_{\underline{w}_j, \underline{b}_j, j = 1, \ldots, q}  \quad & \int_{\mathcal{B}} \left(\mathcal{G}_{u,v}(\kappa) - \big(\max_{j = 1,\ldots,q} \{\underline{w}_j^\tr \kappa + \underline{b}_j\}\big)\right) d\kappa \\
        \text{s.t.} \quad &  \max_{j = 1,\ldots,q} \{\underline{w}_j^\tr \kappa + \underline{b}_j\} \leq \mathcal{G}_{u,v}(\kappa), \quad \forall \kappa \in \mathcal{B}.
    \end{aligned}
\end{equation}
Computing the upper bound for~\eqref{eq:pw_linear_bounds} is similar. This optimisation
problem~\eqref{eq:bound_integral_optimisation} is highly nontrivial to solve
since 
\textit{the integral cost function is hard to evaluate due to the nonlinearity of} $\mathcal{G}_{u,v}(\kappa)$.
Motivated by~\cite{balunovic2019certifying}, we first
sample the transformation parameter $\kappa_i$ from $\mathcal{B}$ to
obtain the sampled pixel values $\mathcal{G}_{u,v}(\kappa_i)$, and
then solve a sampled version
of~\eqref{eq:bound_integral_optimisation}. The resulting piecewise
bound is guaranteed to be sound on the sampling points $\kappa_i \in
\mathcal{B}$  but could be unsound on non-sampled points. To derive a
final sound piecewise bounds for  $\mathcal{G}_{u,v}(\kappa)$,
we bound the maximum violation over the entire
$\mathcal{B}$ using a branch-and-bound Lipschitz optimisation
procedure.

\subsection{Linear optimisation based on sampling points}

Here, we first randomly select $N$ transformation parameters $\kappa_i \in
\mathcal{B}$, $i = 1, \ldots, N$, to obtain a sampled version of~\eqref{eq:bound_integral_optimisation} as follows
\begin{multline}\label{eq:bound_sample_optimisation}
       \min_{\underline{w}_j, \underline{b}_j, j = 1, \ldots, q} \frac{1}{N}\sum_{i=1}^N \left(\mathcal{G}_{u,v}(\kappa_i) -\big(\max_{j = 1,\ldots,q} \{\underline{w}_j^\tr \kappa_i + \underline{b}_j\}\big)\right) \\
        \text{subject to} \max_{j = 1,\ldots,q} \{\underline{w}_j^\tr \kappa_i + \underline{b}_j\} \leq \mathcal{G}_{u,v}(\kappa_i), i = 1,\! \ldots,\! N.
\end{multline}
We denote the optimal cost value of~\eqref{eq:bound_sample_optimisation} as $\beta^*$.
In~\eqref{eq:bound_sample_optimisation}, the number of piecewise
linear segments $q$ is fixed \emph{a priori}.
Still, problem~\eqref{eq:bound_sample_optimisation} is nontrivial to solve
jointly for all piecewise segments $\underline{w}_j, \underline{b}_j,
j = 1, \ldots, q$ unless $q = 1$
(where~\eqref{eq:bound_sample_optimisation} is reduced to a single
linear program). One difficulty is to determine the effective domain
of each piecewise linear segment.

To alleviate this, we propose to split
the whole domain $\mathcal{B}$ into $q$ sub-domains $\mathcal{B}_1,
\ldots, \mathcal{B}_q$, and then optimize each piecewise linear
segment over $\mathcal{B}_j$, $j = 1, \ldots, q$, individually. 
We then use
the following $q$ independent linear programs to approximate the
solution to~\eqref{eq:bound_sample_optimisation}:
\begin{equation}\label{eq:bound_sample_optimisation_individual}
    \begin{aligned}
      \beta^*_j :=  \min_{\underline{w}_j, \underline{b}_j} \;\; & \frac{1}{N}\sum_{\kappa_i \in \mathcal{B}_j }\left(\mathcal{G}_{u,v}(\kappa_i) - \big(\underline{w}_j^\tr \kappa_i + \underline{b}_j\big) \right) \\
        \text{subject to} \;\; &  \underline{w}_j^\tr \kappa_i + \underline{b}_j \leq \mathcal{G}_{u,v}(\kappa_i), \; i = 1, \ldots, N,
    \end{aligned}
\end{equation}
for $j = 1, \ldots, q$.  Note that in
\eqref{eq:bound_sample_optimisation_individual}, we minimise the
approximation error over only the sample points within a given domain
$\mathcal{B}_j$; however, we force each segment to satisfy the constraints at every sample point $\kappa_i \in \mathcal{B}$ over the whole domain. 

We have the following result for the quality of the solution from~\eqref{eq:bound_sample_optimisation_individual}.

\begin{proposition} \label{prop:linear_optimisation}
	Given any subdomains $\mathcal{B}_j$, $j = 1, \ldots, q$, the
	optimal solutions $\underline{w}_j, \underline{b}_j$, $j = 1,
	\ldots, q$, to~\eqref{eq:bound_sample_optimisation_individual}  are
	suboptimal to~\eqref{eq:bound_sample_optimisation}, i.e.,
	$\sum_{j=1}^q \beta^*_j \geq \beta^*$. There exists a set of
	subdomains $\mathcal{B}_j$, $j = 1, \ldots, q$, such that the optimal
	solutions to~\eqref{eq:bound_sample_optimisation}
	and~\eqref{eq:bound_sample_optimisation_individual} are identical,
	i.e.,  $\sum_{j=1}^q \beta^*_j = \beta^*$.
\end{proposition}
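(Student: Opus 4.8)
The plan is to establish the two assertions by direct comparison of feasible points and objective values, without recourse to duality; throughout I write the samples as $\kappa_1,\dots,\kappa_N$ and use that the subdomains $\mathcal{B}_1,\dots,\mathcal{B}_q$ form a genuine partition of $\mathcal{B}$, hence of the sample set. For the suboptimality bound $\sum_{j=1}^q\beta^*_j \ge \beta^*$, I would start from optimal solutions $(\underline{w}_j^*,\underline{b}_j^*)$ of the $j$-th decoupled program~\eqref{eq:bound_sample_optimisation_individual} and observe that, stacked together, they are feasible for the joint program~\eqref{eq:bound_sample_optimisation}: each satisfies $\underline{w}_j^{*\tr}\kappa_i+\underline{b}_j^* \le \mathcal{G}_{u,v}(\kappa_i)$ for \emph{every} sample (this is exactly the constraint of~\eqref{eq:bound_sample_optimisation_individual}), so their pointwise maximum over $j$ is also $\le \mathcal{G}_{u,v}(\kappa_i)$. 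I then evaluate the joint objective at this point: splitting $\sum_{i=1}^N$ into $\sum_{j=1}^q\sum_{\kappa_i\in\mathcal{B}_j}$ and using $\max_{j'}\{\underline{w}_{j'}^{*\tr}\kappa_i+\underline{b}_{j'}^*\} \ge \underline{w}_j^{*\tr}\kappa_i+\underline{b}_j^*$ whenever $\kappa_i\in\mathcal{B}_j$, the contribution of block $j$ is at most $\beta^*_j$, so the joint objective at this feasible point is at most $\sum_{j=1}^q\beta^*_j$; since $\beta^*$ is the minimum of the joint objective, $\beta^* \le \sum_{j=1}^q\beta^*_j$.

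For the existence of subdomains attaining equality, I would fix an optimal solution $(\underline{w}_j^\circ,\underline{b}_j^\circ)_{j=1}^q$ of~\eqref{eq:bound_sample_optimisation} and define the subdomains by the \emph{active-segment} rule: assign $\kappa\in\mathcal{B}$ to $\mathcal{B}_j$ only if $j\in\argmax_{j'}\{\underline{w}_{j'}^{\circ\tr}\kappa+\underline{b}_{j'}^\circ\}$, breaking ties by a fixed priority so that $\mathcal{B}_1,\dots,\mathcal{B}_q$ partition $\mathcal{B}$. With these subdomains, $(\underline{w}_j^\circ,\underline{b}_j^\circ)$ is feasible for the $j$-th decoupled program, since joint feasibility again forces $\underline{w}_j^{\circ\tr}\kappa_i+\underline{b}_j^\circ\le\mathcal{G}_{u,v}(\kappa_i)$ for all $i$; therefore $\beta^*_j \le \frac{1}{N}\sum_{\kappa_i\in\mathcal{B}_j}\big(\mathcal{G}_{u,v}(\kappa_i)-(\underline{w}_j^{\circ\tr}\kappa_i+\underline{b}_j^\circ)\big)$. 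Summing over $j$ and noting that for $\kappa_i\in\mathcal{B}_j$ the quantity $\underline{w}_j^{\circ\tr}\kappa_i+\underline{b}_j^\circ$ equals the pointwise maximum $\max_{j'}\{\underline{w}_{j'}^{\circ\tr}\kappa_i+\underline{b}_{j'}^\circ\}$ by construction, the right-hand side collapses to exactly the joint objective evaluated at $(\underline{w}_j^\circ,\underline{b}_j^\circ)$, i.e.\ to $\beta^*$; hence $\sum_{j=1}^q\beta^*_j \le \beta^*$, and combining with the first part gives $\sum_{j=1}^q\beta^*_j = \beta^*$.

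The argument is essentially bookkeeping, so the one point needing care is attainment of the optima used above. Each decoupled program is a feasible linear program whose own constraints force its objective to be nonnegative, so it attains its minimum; for the joint program I would justify attainment by noting that its objective is concave (a constant minus a sum of pointwise maxima of affine functions) and bounded below by $0$ on the nonempty feasible polyhedron — a concave function that is bounded below on a polyhedron is non-decreasing along every recession direction and constant on the lineality space, hence its minimum is attained at an extreme point. If one prefers to avoid this observation, both comparisons can be rerun with $\varepsilon$-optimal solutions and $\varepsilon\to0$, which still delivers the claimed equality of optimal values.
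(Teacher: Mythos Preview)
Your proof is correct and follows essentially the same approach as the paper: both establish $\sum_j\beta^*_j\ge\beta^*$ by noting that the decoupled optima are jointly feasible and using $\max_{j'}\{\cdot\}\ge\underline{w}_j^\tr\kappa_i+\underline{b}_j$ on each block, and both obtain equality by taking the subdomains to be the active-segment regions of a joint optimum. Your presentation is slightly more explicit---in particular the discussion of attainment is absent from the paper---but the core argument is identical.
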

\begin{proof}
Consider the piecewise linear function in the objective function~\eqref{eq:bound_sample_optimisation}. Let $\mathcal{B}_j, j = 1, \ldots, q$ be the effective piecewise domain of the $j$th segment, \textit{i.e.},
\begin{equation} \label{eq:piecewise_domain}
    \max_{j = 1,\ldots,q} \{\underline{w}_j^\tr \kappa_i + \underline{b}_j\} = \begin{cases}
    \underline{w}_1^\tr \kappa_i + \underline{b}_1, & \text{if} \quad \kappa_i \in \mathcal{B}_1 \\
    \quad \vdots \\
    \underline{w}_q^\tr \kappa_i + \underline{b}_q, & \text{if} \quad \kappa_i \in \mathcal{B}_q.
    \end{cases}
\end{equation}
Then, the objective function~\eqref{eq:bound_sample_optimisation} can be equivalently written into
\begin{equation*}
\begin{aligned}
&\frac{1}{N}\sum_{i=1}^N \left(\mathcal{G}_{u,v}(\kappa_i) - \big(\max_{j = 1,\ldots,q} \{\underline{w}_j^\tr \kappa_i + \underline{b}_j\}\big)\right) \\
= &\frac{1}{N}\sum_{j=1}^q\left(\sum_{\kappa_i \in \mathcal{B}_j }\left(\mathcal{G}_{u,v}(\kappa_i) - \big(\underline{w}_j^\tr \kappa_i + \underline{b}_j\big) \right)\right)
\end{aligned}
\end{equation*}
Therefore,~\eqref{eq:bound_sample_optimisation} is equivalent to
\begin{multline}\label{eq:bound_sample_optimisation_s2_nonum}
      \min_{\underline{w}_j, \underline{b}_j, \mathcal{B}_j, j = 1, \ldots, q} \\
      \sum_{j=1}^q\left(\frac{1}{N}\sum_{\kappa_i \in \mathcal{B}_j }\left(\mathcal{G}_{u,v}(\kappa_i) - \big(\underline{w}_j^\tr \kappa_i + \underline{b}_j\big) \right)\right) \\
        \textbf{s.t.} \quad \underline{w}_j^\tr \kappa_i + \underline{b}_j \leq \mathcal{G}_{u,v}(\kappa_i), 
         \quad i = 1, \ldots, N, j = 1, \ldots q.
\end{multline}
Note that the piecewise domains $\mathcal{B}_j$ are determined by the linear segments $\underline{w}_j, \underline{b}_j, j = 1, \ldots, q$ implicitly in~\eqref{eq:piecewise_domain}. We need to simultaneously optimize the choices of $\mathcal{B}_j$ in~\eqref{eq:bound_sample_optimisation_s2}, making it computationally hard to solve.

A suboptimal solution for~\eqref{eq:bound_sample_optimisation_s2} is to \emph{a priori} fix the effective domain $\mathcal{B}_j$ and optimize over $\underline{w}_j, \underline{b}_j, j = 1, \ldots, q$ only, i.e.,
\begin{multline}\label{eq:bound_sample_optimisation_s2}
       \hat{\beta} := \\
       \min_{\underline{w}_j, \underline{b}_j, j = 1, \ldots, q} \sum_{j=1}^q\left(\frac{1}{N}\sum_{\kappa_i \in \mathcal{B}_j }\left(\mathcal{G}_{u,v}(\kappa_i) - \big(\underline{w}_j^\tr \kappa_i + \underline{b}_j\big) \right)\right) \\
        \textbf{s.t.} \quad \underline{w}_j^\tr \kappa_i + \underline{b}_j \leq \mathcal{G}_{u,v}(\kappa_i), 
         \quad i = 1, \ldots, N, j = 1, \ldots q,
\end{multline}
which is decoupled into $q$ individually linear programs, $j = 1, \ldots, q$
\begin{equation}\label{eq:bound_sample_optimisation_individual_s2}
    \begin{aligned}
      \beta^*_j :=  \min_{\underline{w}_j, \underline{b}_j} \quad & \frac{1}{N}\sum_{\kappa_i \in \mathcal{B}_j }\left(\mathcal{G}_{u,v}(\kappa_i) - \big(\underline{w}_j^\tr \kappa_i + \underline{b}_j\big) \right) \\
        \text{subject to} \quad &  \underline{w}_j^\tr \kappa_i + \underline{b}_j \leq \mathcal{G}_{u,v}(\kappa_i), \; i = 1, \ldots, N.
    \end{aligned}
\end{equation}
Therefore, it is clear that $\hat{\beta} = \sum_{j=1}^q \beta^*_j \geq \beta^*$. On the other hand, suppose the optimal solution to~\eqref{eq:bound_sample_optimisation} leads to the optimal effective domains $\mathcal{B}_j,j = 1, \ldots, q$ in~\eqref{eq:piecewise_domain}. Then, using this set $\mathcal{B}_j,j = 1, \ldots, q$, the decoupled linear programs~\eqref{eq:bound_sample_optimisation_individual_s2} are equivalent to~\eqref{eq:bound_sample_optimisation_s2} and~\eqref{eq:bound_sample_optimisation}. 
\end{proof}
To obtain a good solution~\eqref{eq:bound_sample_optimisation},
choosing the subdomains $\mathcal{B}_j$ becomes essential. A uniform
grid partition is one, naive choice. Another is to partition the
subdomains based on the distribution of the sampling points
$\mathcal{G}_{u,v}(\kappa_i)$.
The details of the splitting procedure are provided in the appendix. 

%

\begin{remark} \normalfont \textbf{(Explicit input splitting vs. piecewise linear constraints)}
    We note that one can perform explicit input splitting
	$\mathcal{B}_j, j = 1, \ldots, q, $ and verify each of them by
	solving~\eqref{eq:nn_verification} separately in order to certify
	the original large domain $\mathcal{B}$. The main drawback of this
	explicit input splitting is that we need to call a verifier for
	each subdomain $\mathcal{B}_j$ which can be hugely time consuming and not scalable. On the contrary, it only requires to solve multiple small linear
	programs~\eqref{eq:bound_sample_optimisation_individual} to derive
	our piece-wise linear constraints. Then, we only need to call a
	verifier once to solve the verification
	problem~\eqref{eq:nn_verification} over $\mathcal{B}$. 
	For tight verifiers, such as those mentioned in Section~\ref{sec:related-work}, this
	process is much more efficient than explicit input splitting.
\end{remark}

\subsection{Lipschitz optimisation for obtaining sound piecewise linear bounds}

The piecewise linear constraints
from~\eqref{eq:bound_sample_optimisation_individual} are valid for the
sampling points $\kappa_i \in \mathcal{B}, i = 1, \ldots, N$. To make
the constraints sound over all $\kappa \in \mathcal{B}$, we must shift
them such that all points on the pixel value function,
$\mathcal{G}_{u,v}(\kappa)$, satisfy the constraints in
\eqref{eq:pw_linear_bounds}. For this, we define a new function that
tracks the violation of a piecewise bound over the entire domain
$\mathcal{B}$:
\begin{equation}\label{eq:violation_function}
    \begin{aligned}
   \xi^*_{u,v} := \max_{\kappa\in \mathcal {B}} \quad
   \underline{f}_{u, v}(\kappa),
    \end{aligned}
\end{equation}
where $ \underline{f}_{u,
   v}(\kappa) = \max_{j = 1,\ldots,q} \{\underline{w}_j^\tr \kappa + \underline{b}_j\} - \mathcal{G}_{u,v}(\kappa).$ Then, we naturally have a sound piecewise linear lower bound as
$$
    \max_{j = 1,\ldots,q} \{\underline{w}_j^\tr \kappa + \underline{b}_j\} - \xi^*_{u,v} \leq \mathcal{G}_{u,v}(\kappa), \; \forall \kappa \in \mathcal{B}.
$$
However, computing the exact maximum $\xi^*$ is computationally hard
due to the \textit{nonconvexity, nonlinearity and nonsmoothness} of $\underline{f}_{u, v}(\kappa)$. Instead, given any $\epsilon > 0$, we can use a branch-and-bound Lipschitz optimisation procedure to find $\underline{\xi}^* \in \mathbb{R}$ satisfying
$
   \underline{\xi}^*\leq  \xi^*_{u,v} \leq \underline{\xi}^* + \epsilon.
$

 To establish the branch-and-bound Lipschitz optimisation procedure, we need to
  characterise the properties of the violation function $\underline{f}_{u,v}(\kappa)$.
\begin{proposition} \label{prop:lipschitz_constants}
   The violation function $\underline{f}_{u, v}(\kappa) := \max_{j =
   1,\ldots,q} \{\underline{w}_j^\tr \kappa + \underline{b}_j\} -
   \mathcal{G}_{u,v}(\kappa)$ is nonconvex, nonsmooth, and Lipschitz continuous
   over $\mathcal{B} \subset \mathbb{R}^d$. Furthermore, there exist
   $L_m > 0$, $m = 1, \ldots, d$, such that $\forall \kappa_1, \kappa_2 \in \mathcal{B}$
   \begin{equation} \label{eq:Lipschitz_bound}
      |\underline{f}_{u, v}(\kappa_1) - \underline{f}_{u, v}(\kappa_2)| \leq \sum_{m=1}^d L_m |\kappa_1(m) - \kappa_2(m)|.
   \end{equation}
\end{proposition}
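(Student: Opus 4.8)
The plan is to split the violation function as $\underline{f}_{u,v} = g - \mathcal{G}_{u,v}$, where $g(\kappa) := \max_{j = 1,\ldots,q}\{\underline{w}_j^\tr \kappa + \underline{b}_j\}$, and to establish Lipschitz continuity of each piece separately on the compact box $\mathcal{B}$, with explicit coordinate-wise constants. The term $g$ is immediate: as a pointwise maximum of finitely many affine maps it is convex, piecewise linear, and satisfies $|g(\kappa_1) - g(\kappa_2)| \le \sum_{m=1}^d \big(\max_j |\underline{w}_j(m)|\big)\, |\kappa_1(m) - \kappa_2(m)|$, which follows by the standard argument of evaluating, for fixed $\kappa_2$, the index active at $\kappa_1$ and then symmetrising. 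Hence the coordinate-wise constant contributed by $g$ is $L_m^{g} = \max_j |\underline{w}_j(m)|$, and $g$ is nonsmooth wherever the active segment changes (i.e.\ on the boundaries of the effective domains $\mathcal{B}_j$).

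The substantive part is to show $\mathcal{G}_{u,v}(\kappa) = \mathcal{P}_{\alpha,\beta}\circ\mathcal{I}\circ\mathcal{T}^{-1}_\mu(u,v)$ is Lipschitz on $\mathcal{B}$ (recall $\kappa = (\alpha,\beta,\mu)$), handling the three factors in turn. First, $\mu \mapsto \mathcal{T}^{-1}_\mu(u,v) \in \mathbb{R}^2$ is $C^\infty$ on $\mathcal{B}$: the entries of $\mathcal{T}^{-1}_\mu$ for rotation, translation, shearing and scaling are analytic in $\mu$ (for scaling we use the standing assumption that the scaling interval in $\mathcal{B}$ is bounded away from the degenerate value, so the reciprocal is smooth), so its partial derivatives are bounded on the compact $\mathcal{B}$ and bounding $\partial\mathcal{T}^{-1}_\mu(u,v)/\partial\mu_m$ gives coordinate-wise Lipschitz constants. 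Second, the bilinear interpolation $\mathcal{I}$ is continuous on $\mathbb{R}^2$ and, on each of the finitely many interpolation cells $[i,i+1]\times[j,j+1]$ covering the image domain, is a bilinear polynomial whose coefficients are bounded pixel values of $\bar{x}$; hence it is Lipschitz on each cell with a constant controlled by $\max_{i,j}|p_{i,j}|$. Since $\mathcal{I}$ is continuous and the cells partition a convex region by axis-aligned hyperplanes, a short path argument (join two points by a segment, which meets finitely many cells, and sum the per-cell increments) upgrades this to a global Lipschitz bound $L_{\mathcal I}$ on the relevant compact set; the fixed out-of-bounds convention (zero-padding or clipping to the border) preserves continuity and this piecewise-bilinear structure. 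Third, $\mathcal{P}_{\alpha,\beta}(\xi) = \alpha\xi + \beta$ is smooth, and on $\mathcal{B}$ the value $\xi = \mathcal{I}(\cdots)$ lies in $[\min_{i,j}p_{i,j}, \max_{i,j}p_{i,j}]$, so its partials ($\partial_\alpha = \xi$, $\partial_\beta = 1$, $\partial_\xi = \alpha$) are uniformly bounded. Composing these three Lipschitz maps — noting that the composition with $\mathcal{T}^{-1}$ mixes the $\mu$-coordinates, so the coordinate-wise constant for $\mu_m$ is $(\max_{\mathcal B}|\alpha|)\cdot L_{\mathcal I}\cdot\sup_{\mathcal B}\|\partial\mathcal{T}^{-1}_\mu(u,v)/\partial\mu_m\|$, while the $\alpha$- and $\beta$-coordinates pick up $\sup_{\mathcal B}|\xi|$ and $1$ — yields coordinate-wise Lipschitz constants $L_m^{\mathcal G}$ for $\mathcal{G}_{u,v}$ on $\mathcal{B}$.

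Adding the two contributions, $\underline{f}_{u,v}$ is Lipschitz on $\mathcal{B}$ with constants $L_m := L_m^{g} + L_m^{\mathcal G} > 0$, which is exactly \eqref{eq:Lipschitz_bound} (strict positivity is free, as each $L_m$ may be enlarged). For the qualitative claims: $\underline{f}_{u,v}$ is nonsmooth because both $g$ (kinks between active segments) and $\mathcal{G}_{u,v}$ (the interpolation $\mathcal{I}$ is nonsmooth on cell boundaries, which $\mathcal{T}^{-1}_\mu(u,v)$ crosses as $\mu$ varies, e.g.\ under rotation) are generically nondifferentiable, and it is nonconvex because $\mathcal{G}_{u,v}$ is in general neither convex nor concave in $\kappa$ (the pixel value oscillates with, e.g., the rotation angle), so subtracting it from the convex $g$ destroys convexity — a concrete small-rotation instance witnesses this. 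I expect the main obstacle to be the bookkeeping in the composition step: in particular the patching lemma that turns the cellwise estimate for the nonsmooth $\mathcal{I}$ into a global one, and tracking which coordinates of $\kappa$ each factor depends on so that the final bound retains the coordinate-separated form \eqref{eq:Lipschitz_bound} rather than merely a joint Lipschitz estimate.
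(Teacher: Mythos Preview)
Your proposal is correct and follows essentially the same approach as the paper: decompose $\underline{f}_{u,v}$ into the piecewise-linear maximum and $\mathcal{G}_{u,v}$, argue that each factor of $\mathcal{G}_{u,v} = \mathcal{P}_{\alpha,\beta}\circ\mathcal{I}\circ\mathcal{T}^{-1}_\mu$ is Lipschitz (with $\mathcal{I}$ being the source of nonsmoothness and nonconvexity), and conclude. Your version is considerably more detailed than the paper's terse proof---in particular, you supply explicit coordinate-wise constants and the patching argument for $\mathcal{I}$---but the structure is the same.
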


\begin{proof}
The pixel value function is given by
$
    \mathcal{G}_{u,v}(\kappa) := \mathcal{P}_{\alpha, \beta} \circ \mathcal{I} \circ \mathcal{T}^{-1}_\mu(u,v).
$
We know that the spatial transformation $\mathcal{T}_{\mu}(u,v)$ and $\mathcal{P}_{\alpha, \beta}$ are continuous and differentiable everywhere. The interpolation function $\mathcal{I}(u,v)$ is continuous everywhere, but it is only differentiable within each interpolation region and it can be nonsmooth on the boundary. Also, $\mathcal{T}_{\mu}(u,v)$ and $\mathcal{I}(u,v)$ are generally nonconvex.

In addition, the piecewise linear function
$\max_{j =
   1,\ldots,q} \{\underline{w}_j^\tr \kappa + \underline{b}_j\}$
    is continuous but not differentiable everywhere. Therefore, the violation function $\underline{f}_{u, v}(\kappa)$ is nonconvex and nonsmooth in general. Finally, all the functions $\mathcal{T}_{\mu}(u,v)$, $\mathcal{P}_{\alpha, \beta}$, $\mathcal{I}(u,v)$ and $\max_{j =
   1,\ldots,q} \{\underline{w}_j^\tr \kappa + \underline{b}_j\}$ are Lipschitz continuous, so is the violation function $\underline{f}_{u, v}(\kappa)$. Thus, there exist
   $L_m > 0$, $m = 1, \ldots, d$, such that \eqref{eq:Lipschitz_bound} holds. 
\end{proof}
The properties of the violation function $\underline{f}_{u,v}(\kappa)$ in Proposition~\ref{prop:lipschitz_constants} are directly inherited from nonconvexity and nonsmoothness of the interpolation function $\mathcal{I}(u,v)$. The Lipschitz continuity is also from the interpolation function and the piecewise linear function.

With the information of $L_m$ in~\eqref{eq:Lipschitz_bound}, we are
ready to get a lower and an  upper bound  for  $\xi^*$ upon evaluating
the function at any point $\kappa_0 \in \mathcal{B}$:
\begin{equation} \label{eq:lower_upper_xi}
\begin{aligned}
   \underline{f}_{u, v}(\kappa_0)  &\leq \xi^* \\
   &= \max_{\kappa \in \mathcal{B}} \; \underline{f}_{u, v}(\kappa) \\
   &\leq  \max_{\kappa \in \mathcal{B}} \;  \underline{f}_{u, v}(\kappa_0) + \sum_{m=1}^d L_m |\kappa(m) - \kappa_0(m)| \\
   &\leq \underline{f}_{u, v}(\kappa_0) + \sum_{m=1}^d L_m h_m,
\end{aligned}
\end{equation}
where $h_m > 0$ denotes the difference of the lower and upper bound in each box constraint of $\mathcal{B}$. These lower and upper bounds~\eqref{eq:lower_upper_xi} are useful in the branch-and-bound procedure.

Still, we need estimate the Lipschitz constant $L_m$ in~\eqref{eq:Lipschitz_bound}. In our work, we show how to estimate the constant $L_m$ based on the gradient of $ \underline{f}_{u, v}(\kappa)$ whenever it is differentiable (note that $ \underline{f}_{u, v}(\kappa)$ is not differentiable everywhere)

\begin{proposition} \label{proposition:gradient}
    Let $\mathrm{Diff}(\mathcal{B})$ be the subset of $\mathcal{B}$
	where $\underline{f}_{u, v}(\kappa)$ is differentiable. Then, the Lipschitz constants in~\eqref{eq:Lipschitz_bound} can be chosen as
    $
       L_m = \sup_{\kappa \in \mathrm{Diff}(\mathcal{B})} |\nabla \underline{f}_{u, v}^\tr e_m|,
    $
    where $e_m \in \mathbb{R}^d$ is a basis vector with only the $m$-th element being one and the rest being zero.
\end{proposition}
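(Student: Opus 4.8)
The plan is to reduce \eqref{eq:Lipschitz_bound} to a telescoping argument along axis-parallel segments, using that a Lipschitz function of one real variable is absolutely continuous and therefore equals the integral of its a.e.-defined derivative. First I would record the structure of the non-differentiability set. By Proposition~\ref{prop:lipschitz_constants} the map $\underline{f}_{u,v}$ is Lipschitz on $\mathcal{B}$, so by Rademacher's theorem $\nabla \underline{f}_{u,v}$ exists almost everywhere and is essentially bounded; in particular $L_m := \sup_{\kappa \in \mathrm{Diff}(\mathcal{B})}|\nabla \underline{f}_{u,v}^\tr e_m| < \infty$. (If some $L_m = 0$, i.e.\ $\underline{f}_{u,v}$ does not depend on the $m$-th coordinate, the estimate is trivial in that direction, and $0$ may be replaced by any positive number to match the statement of Proposition~\ref{prop:lipschitz_constants}.) Moreover $\mathcal{B}\setminus\mathrm{Diff}(\mathcal{B})$ is contained in a finite union of smooth hypersurfaces: the kinks of $\max_j\{\underline{w}_j^\tr\kappa+\underline{b}_j\}$ lie on finitely many affine hyperplanes, while the non-smoothness of $\mathcal{I}\circ\mathcal{T}_\mu^{-1}(u,v)$ occurs only where a coordinate of the smooth map $\mathcal{T}_\mu^{-1}(u,v)$ takes an integer value, which over the bounded box $\mathcal{B}$ happens on finitely many smooth level sets; hence this set has Lebesgue measure zero.

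Next, fix $\kappa_1,\kappa_2\in\mathcal{B}$ and connect them by the staircase path $\kappa_1 = \zeta_0 \to \zeta_1 \to \dots \to \zeta_d = \kappa_2$, where $\zeta_m$ differs from $\zeta_{m-1}$ only in the $m$-th coordinate, which is moved from $\kappa_1(m)$ to $\kappa_2(m)$; since $\mathcal{B}$ is a box it is convex and this path stays in $\mathcal{B}$. On the $m$-th segment the function $t \mapsto \underline{f}_{u,v}(\zeta_{m-1}+t\,e_m)$ is Lipschitz on a compact interval, hence absolutely continuous, so
\begin{equation*}
    \left| \underline{f}_{u,v}(\zeta_m) - \underline{f}_{u,v}(\zeta_{m-1})\right| = \left| \int \frac{d}{dt}\,\underline{f}_{u,v}(\zeta_{m-1}+t\,e_m)\,dt \right| \leq \int \left| \frac{d}{dt}\,\underline{f}_{u,v}(\zeta_{m-1}+t\,e_m)\right|dt \leq L_m\,|\kappa_2(m) - \kappa_1(m)|,
\end{equation*}
with the integral taken over $t$ between $0$ and $\kappa_2(m)-\kappa_1(m)$, because wherever the one-variable derivative and the full gradient both exist they coincide and the integrand is bounded by $L_m$. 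Summing over $m = 1,\dots,d$ and using the triangle inequality yields \eqref{eq:Lipschitz_bound}.

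The delicate point — and the one I expect to be the main obstacle — is that a single axis-parallel segment may lie entirely inside the measure-zero exceptional set (a hypersurface can contain whole lines in a coordinate direction), so the identification of $\frac{d}{dt}\,\underline{f}_{u,v}(\zeta_{m-1}+t\,e_m)$ with $\nabla\underline{f}_{u,v}^\tr e_m$ along that segment is not automatic. I would resolve this by a Fubini/approximation argument: for each coordinate direction the absolute-continuity-on-lines (ACL) property of Lipschitz functions guarantees that for almost every line parallel to $e_m$ the restriction is absolutely continuous with a.e.-derivative equal to $\partial_m \underline{f}_{u,v}$; hence one can perturb $\kappa_1,\kappa_2$ to nearby points $\kappa_1',\kappa_2'$ for which all $d$ segments of the corresponding staircase path are such "good" lines, apply the displayed estimate to $\kappa_1',\kappa_2'$, and pass to the limit using the global continuity of $\underline{f}_{u,v}$ from Proposition~\ref{prop:lipschitz_constants} together with the continuity of the right-hand side. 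This establishes \eqref{eq:Lipschitz_bound} for all $\kappa_1,\kappa_2\in\mathcal{B}$ with the claimed constants $L_m$.
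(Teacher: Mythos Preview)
Your argument is correct and takes a genuinely different route from the paper's. The paper connects $\kappa_1$ and $\kappa_2$ by the \emph{straight} segment, sets $h(t)=\underline{f}_{u,v}(\kappa_1+t(\kappa_2-\kappa_1))$, integrates $h'$ via absolute continuity, and bounds $\sup_s|h'(s)|$ by the supremum of the directional derivative $\nabla_{\kappa_2-\kappa_1}\underline{f}_{u,v}$; the step from directional derivatives (which may exist where the full gradient does not) to $\sup_{\mathrm{Diff}(\mathcal{B})}|\nabla\underline{f}_{u,v}^\tr(\kappa_2-\kappa_1)|$ is outsourced to Lemma~3 of \cite{jordan2020exactly}, after which a coordinate triangle inequality produces $\sum_m L_m|\kappa_1(m)-\kappa_2(m)|$. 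Your staircase decomposition avoids that external lemma entirely: because each leg already points along some $e_m$, the ACL/Fubini perturbation immediately puts you on lines where the one-variable derivative coincides a.e.\ with $\nabla\underline{f}_{u,v}^\tr e_m$, and the sum over $m$ comes from telescoping rather than from splitting a direction vector. Your route is more self-contained and makes the coordinate-wise structure of \eqref{eq:Lipschitz_bound} explicit; the paper's route is shorter once one imports the Jordan--Dimakis lemma and yields as a by-product the sharper intermediate estimate $|\underline{f}_{u,v}(\kappa_1)-\underline{f}_{u,v}(\kappa_2)|\le\sup_{\kappa\in\mathrm{Diff}(\mathcal{B})}|\nabla\underline{f}_{u,v}(\kappa)^\tr(\kappa_2-\kappa_1)|$. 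Your remark on the hypersurface structure of the non-differentiability set is more than strictly needed (Rademacher plus Fubini suffices), but it does no harm.
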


\begin{proof}
This proof is motivated by~\cite{jordan2020exactly}. In order to prove Proposition~\ref{proposition:gradient}, we first state a useful result from~\cite[Lemma 3]{jordan2020exactly}. Let $f:\mathbb{R}^n \rightarrow \mathbb{R}$ be Lipschitz continuous over an open set $\Omega \subset \mathbb{R}^n$. We denote $\mathrm{Diff}(\Omega)$ as the subset of $\Omega$ where $f(x)$ is differentiable. We also let $\mathcal{D}$ be the set of $(x,v) \in \mathbb{R}^{2n}$ for which the directional derivative, $\nabla_v f(x)$, exists and $x \in \Omega$. Finally, we let $\mathcal{D}_v$ be the set
$
    \mathcal{D}_v = \{x \in \mathbb{R}^n\mid (x,v) \in \mathcal{D}\}.
$
Then, we have the following inequality~\cite[Lemma 3]{jordan2020exactly}
\begin{equation} \label{eq:derivative_inequality}
    \sup_{x \in \mathcal{D}_v} |\nabla_v f(x)| \leq \sup_{x \in \mathrm{Diff}(\Omega)} |\nabla f(x)^\tr v|.
\end{equation}

We now proceed to prove Proposition~\ref{proposition:gradient}.
Fix any $\kappa_1, \kappa_2 \in \mathcal{B}$, and we define a function $h: \mathbb{R} \rightarrow \mathbb{R}$ as
$
    h(t) = \underline{f}_{u, v}(\kappa_1 + t(\kappa_2-\kappa_1)).
$
Since $\underline{f}_{u, v}(\kappa)$ is Lipschitz continuous in $\mathcal{B}$, it is clear that $h(t)$ is Lipschitz continuous on the interval $[0,1]$. Thus, by Rademacher's Theorem, $h(t)$ is differentiable everywhere except for a set of measure zero.

We can further define a Lebesgue integrable function $g(t)$ that equal to $h'(t)$ almost everywhere as follows
$$
    g(t) = \begin{cases} h'(t), & \text{if} \quad h'(t) \quad \text{exists} \\
    \sup_{s \in [0,1]} |h'(s)|, &\text{otherwise}
    \end{cases}.
$$
Note that if  $\underline{f}_{u, v}(\kappa)$ is differentiable at some point, we have
$$
    h'(t) = \nabla \underline{f}_{u, v}(\kappa_1 + t(\kappa_2 - \kappa_1))^\tr (\kappa_2 - \kappa_1).
$$
Then we have the following inequalities
\begin{equation*}
\begin{split}
    & |\underline{f}_{u, v}(\kappa_1) - \underline{f}_{u, v}(\kappa_2)| = |h(1) - h(0)| = \left|\int_0^1 g(t) dt\right| \\
    & \leq \int_0^1 |g(t)| dt \\
    & \leq \int_0^1 \sup_{s \in [0,1]} |h'(s)| dt = \sup_{s \in [0,1]} |h'(s)| \\
    & \leq \sup_{\kappa \in \mathcal{D}_{\kappa_2-\kappa_1}} |\nabla_{\kappa_2-\kappa_1} \underline{f}_{u, v}(\kappa)|.
\end{split}
\end{equation*}
Furthermore, considering the inequality in~\eqref{eq:derivative_inequality}~\cite[Lemma 3]{jordan2020exactly}, we have
\begin{equation*}
\begin{split}
    & |\underline{f}_{u, v}(\kappa_1) - \underline{f}_{u, v}(\kappa_2)| \leq \sup_{\kappa \in \mathrm{Diff}(\mathcal{B})} |\nabla \underline{f}_{u, v}(\kappa)^\tr  (\kappa_2 - \kappa_1)| \\
    & \leq \sum_{m=1}^d \sup_{\kappa \in \mathrm{Diff}(\mathcal{B})} |\nabla \underline{f}_{u, v}(\kappa)^\tr e_m|  |\kappa_2(m) - \kappa_1(m)|
\end{split}
\end{equation*}
where $e_m \in \mathbb{R}^d$ is a basis vector with only the $m$-th element being one and the rest being zero. Therefore, the Lipschitz constants in~\eqref{eq:Lipschitz_bound} can be chosen as
    $
       L_m = \sup_{\kappa \in \mathrm{Diff}(\mathcal{B})} |\nabla \underline{f}_{u, v}(\kappa)^\tr e_m|.
    $
\end{proof}






\noindent\textbf{Maximum directional gradient.} To bound the maximum violation $ \xi^*_{u,v}$ in \eqref{eq:violation_function} using~\eqref{eq:lower_upper_xi}, we need to estimate the constant $L_m$, and Proposition~\ref{proposition:gradient} requires us to calculate the maximum directional gradient $|\nabla \underline{f}_{u, v}^\tr e_m|$. Each component of $\nabla \underline{f}_{u, v}$ varies independently with respect to any constituent of the transformation composition, $\mathcal{T}_{\mu}(\kappa_m)$, $m=1, \ldots, d$. Each $L_m$ depends only on a transformation, $\mathcal{T}_\mu$, and interpolation, $\mathcal{I}_{u,v}$. 
The only component that is not differentiable everywhere in the parameter space $\kappa \in \mathcal{B}$, is interpolation $\mathcal{I}_{u,v}(x,y)$ - this due to it being disjoint across interpolation regions. We overcome this by calculating the interpolation gradient, $\nabla_{x,y} \mathcal{I}_{u,v}$ separately in each interpolation region, and taking the maximum interval of gradients from the union, $[\nabla_{x,y}I_{min},\nabla_{x,y}I_{max}]  = [\min(\cup_{k=1, \ldots, n}\nabla_{x,y}I_{k}), \max(\cup_{k=1, \ldots, n}\nabla_{x,y}I_{k})]$, where $I_k$ are the relevant interpolation regions, and $\cup_{k=1,\ldots,n} I_{k} = \mathcal{R} \subset \mathcal{B}$. Computing a bound on $L_m$ this way mirrors the IBP-based procedure outlined in~\cite{balunovic2019certifying}. With this we can calculate an upper bound on $L_m$ to be applied in the Lipschitz algorithm.

\noindent\textbf{Branch-and-bound Lipschitz optimisation procedure.} Similar to~\cite{balunovic2019certifying}, we use a branch-and-bound procedure (See Appendix) where $\underline{f}_{u,v}$ and $\mathcal{B}$ are given as inputs alongside the Lipschitz error, $\epsilon$, and samples per subdomain, $n$. The procedure first samples the violation function $\underline{f}_{u,v}$, obtaining maximum value candidates, this is placed in a list of 3-tuples with the upper bound, $\underline{f}_{{\rm bound},i}$, and corresponding domain, $\mathcal{B}_i$. The key upper bound operation $\rm bound(\cdot)$ is obtained using~\eqref{eq:lower_upper_xi}.
We then check whether each 3-tuple in our list meets the termination criteria, as parameterised by $\epsilon$. If the requirement is satisfied for all elements then we terminate and return $\underline{\xi}^*$. Until the requirement is met for every list element we iteratively split unsatisfied subdomains. This process is repeated until a satisfactory maximum candidate is found, splitting $\kappa$ in each iteration. We can ignore any sub-domain, $\kappa_{\mathfrak{1}}^n$, of $\kappa_{\mathfrak{1}}$ where the function bound $\underline{f}_{\text{bound}}$ in $\kappa_{\mathfrak{1}}^n$ is smaller than a maximum value candidate $\underline{f}_{\text{max}}$ in any other sub-domain. Deciding how to split subdomains is non-trivial for higher dimensional parameter spaces. In the case $\kappa \in \mathbb{R}^1$ we need only decide where to split on a single axis; for which we use the domain midpoint. The crux of our algorithm is approximating the gradient of $\underline{f}_{u,v}(\kappa)$ when it is differentiable, as stated in Proposition~\ref{proposition:gradient} (see appendix for further details on the branch-and-bound procedure). For bounding the violation of piecewise linear bounds we can consider the piecewise bound itself to be made of $q$ linear sub-regions with each one bounded by the intersection with the neighbouring linear piece - or the lower and upper bounds on the transformation parameters. We can then bound the Lipschitz constant in the same way as for a single linear bound, instead starting with $q$ sub-domains. Solving the Lipschitz bounding procedure for each linear segment over only its local domain in this way enables us to bound the Lipschitz constant of a piecewise linear bound in the same time as a linear bound takes.

\section{Experimental Evaluation} \label{section:experiments}
In this section we present three sets of results: (i) a quantitative
study directly comparing the model-agnostic bounds produced by our
piecewise linear approach against the state-of-the-art linear
bounds~\cite{balunovic2019certifying}, (ii) an empirical evaluation of
verification results obtained using linear and piecewise linear
bounds, without input splitting and using the same neural network
verifier~\cite{botoeva2020efficient}, and (iii) a comparison
of our results against the present
state-of-the-art method~\cite{balunovic2019certifying}.


\subsection{Experimental setup}  We consider the MNIST image recognition dataset~\cite{LecunCortesBurges98} and CIFAR10~\cite{Krizhevsky2009LearningML}.
In line with the previous literature~\cite{balunovic2019certifying},
we use two fully-connected ReLU networks, MLP2 and MLP6, and one
convolutional ReLU network, CONV, from the first competition for
neural network verification (VNN-COMP)~\cite{vnncomp}. The
fully-connected networks comprise 2 and 6 layers respectively. Each
layer of each of the networks has 256 ReLU nodes. The convolutional
network comprises two layers. The first layer has 32 filters of size
$5\times5$, a padding of 2 and strides of 2. The second layer has 64
filters of size of $4 \times 4$, a padding of 2 and strides of 1.
Additionally, we employ a larger convolutional ReLU network from
relevant previous work~\cite{balunovic2019certifying}, composed of
three layers: a convolutional layer with 32 filters of size $4\times4$
and strides of 2, a convolutional layer with 64 filters of size
$4\times4$ and strides of 2, and a fully connected layer with 200
nodes. 
All experiments were carried out on an Intel Core
i9-10940X (3.30GHz, 28 cores) equipped with 256GB RAM and running
Linux kernel~5.12. 
DeepG experiment ANS: we do not use GPU in these experiments.

Once a convex over-approximation of the attack space $\Omega_{\epsilon}(\bar{x})$ is computed, (cf. Section~\ref{section:geometric_robustness}) a neural network verifier is required to provide a lower bound on problem~\eqref{eq:nn_verification}.
Unless stated otherwise, the verification results reported in this work are obtained using \texttt{VENUS}, a complete MILP-based verification toolkit for feed-forward neural networks~\cite{botoeva2020efficient}. 


\begin{figure}[t!]
	\centering
	\includegraphics[scale=0.075]{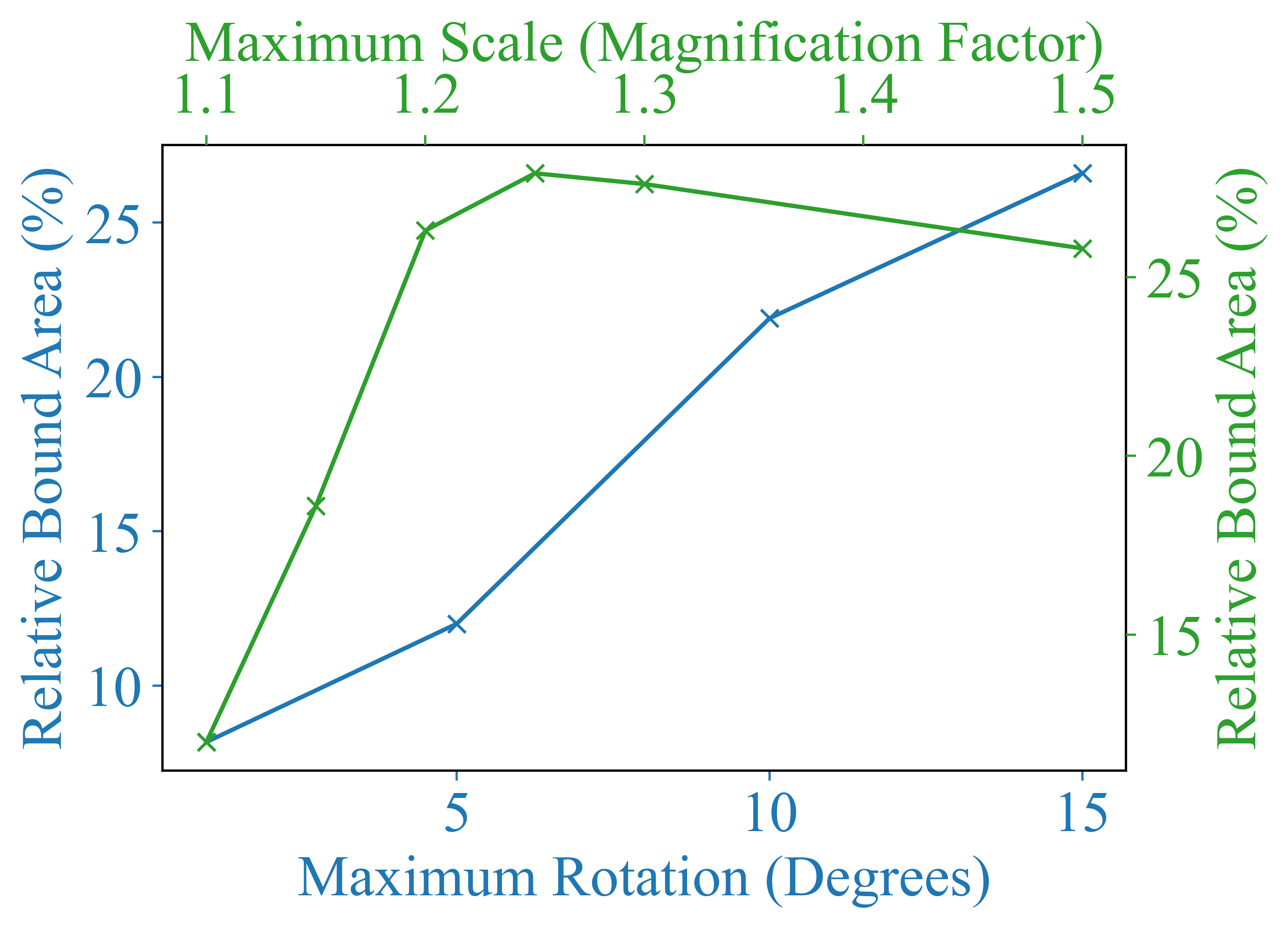}
	\includegraphics[scale=0.075]{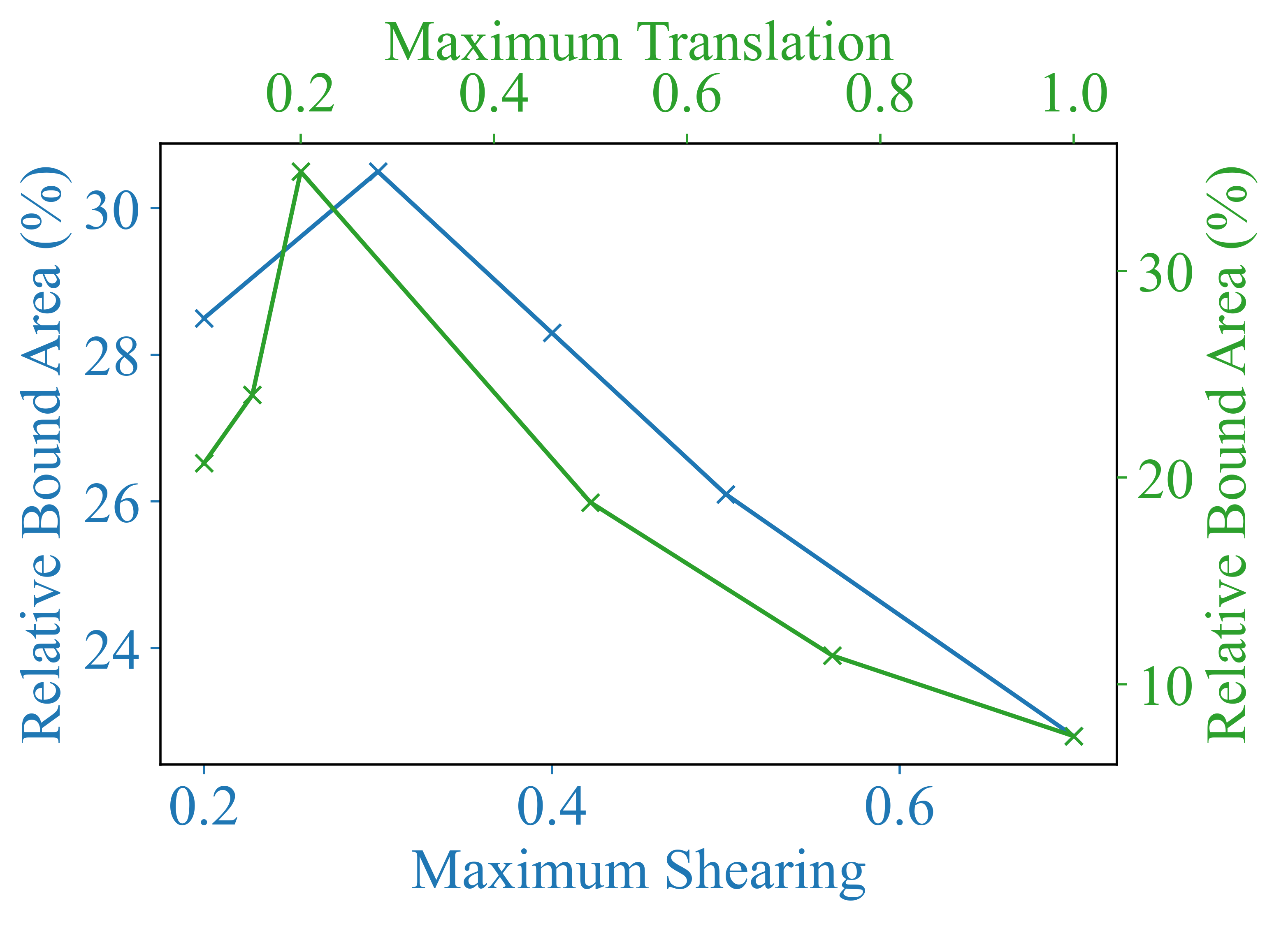}
		\caption{A comparison of area captured by piecewise linear and linear bounds as a function of transformation parameter. Relative bound area is defined as $1 - (V_{\text{PWL}} / V_\text{L})$.}
	\label{fig:area_comparison}
\end{figure}

\begin{table}[t]
	\setlength{\tabcolsep}{4.8pt}
	\caption{Comparison of verification results for piecewise linear
	constraints and linear constraints.}
	\label{table:exp}
	\centering
\begin{tabular}{c|c|cc|ll|ll}
\toprule
\multirow{2}{*}{Model}                   & \multirow{2}{*}{Attack} & \multicolumn{2}{c|}{Verified}                         & \multicolumn{2}{c|}{Falsified} & \multicolumn{2}{c}{Time (s)} \\ \cline{3-8}
                                           &                                 & L                   & PWL                   & L        & PWL        & L         & PWL         \\ \hline
\multirow{4}{*}{MLP2}                    & R(5)                               & 26                     &                      \textbf{28} &     74          &      72      &      0.7          &       3.7      \\
                                           & Sh(0.2)                              & 20                     &           \textbf{26}            &       80        &     74       &        1.2        &            12 \\
                                           & Sc(1.1)                              &             24             &                      24 &       76        &       76     &          1.1      &      51       \\
                                           & T(0.1)                               & 16     & 16 &        84       &           84 &           11     &         54    \\ \hline
\multirow{4}{*}{MLP6}                    & R(15)                               & 0                     &                      \textbf{2} &        12       &      32      &        1602        &    1253         \\
                                           & Sh(0.5)                              & 0                     &                      0 &        16       &       \textbf{68}     &       1591         &      778       \\
                                           & Sc(1.3)                              &          0                &                      0 &         24      &       \textbf{78}     &        1404        &       727      \\
                                           & T(0.2)                               & 0     & \textbf{2} &        26       &           74 &         1397       &       648      \\ \hline
\multicolumn{1}{l|}{\multirow{4}{*}{CONV}} & R(10)                               & \multicolumn{1}{l}{20} & \multicolumn{1}{l|}{\textbf{48}} &        2       &    0        &         1447       &           1044 \\
\multicolumn{1}{l|}{}                      & Sh(0.2)                              & \multicolumn{1}{l}{18} & \multicolumn{1}{l|}{\textbf{50}} &       0        &       0     &          1548      &            1044 \\
\multicolumn{1}{l|}{}                      & Sc(1.3)                              & \multicolumn{1}{l}{0}     & \multicolumn{1}{l|}{\textbf{10}} &         4      &       4     &        1750        &  1663           \\
\multicolumn{1}{l|}{}                      & T(0.15)                               & \multicolumn{1}{l}{0}     & \multicolumn{1}{l|}{\textbf{32}} &        2       &      0      &        1767        &  1397 \\
\bottomrule 
\end{tabular}
\end{table}

\subsection{Experimental results} 

In the following, we will use ``L" to denote the linear relaxation from equation~\eqref{eq:linear_bounds}, and ``PWL" to denote the piecewise linear relaxation from equation~\eqref{eq:pw_linear_bounds}.

\begin{table*}[t!]
\caption{Comparison of L and PWL using \texttt{VENUS}, with verification results taken from DeepG~\cite{balunovic2019certifying}.}
\label{tab:comparison deepg}
\centering
\begin{tabular}{ccccclcl}
\multicolumn{1}{c|}{\multirow{2}{*}{Dataset}} & \multicolumn{1}{c|}{\multirow{2}{*}{Transformation}} & \multicolumn{1}{c|}{\multirow{2}{*}{Accuracy (\%)}} & \multicolumn{1}{c|}{DeepG}          & \multicolumn{2}{c|}{Linear (Ours)}             & \multicolumn{2}{c}{PWL (Ours)} \\ \cline{4-8} 
\multicolumn{1}{c|}{}                         & \multicolumn{1}{c|}{}                                & \multicolumn{1}{c|}{}                               & \multicolumn{1}{c|}{Certified (\%)} & Certified (\%) & \multicolumn{1}{l|}{Time (s)} & Certified (\%)    & Time (s)   \\ \hline
MNIST                                         & R(30)                                                & 99.1                                                & 87.8                                & 90.8           & 37.9                          & 92.9              & 28.3       \\
CIFAR                                         & R(2)Sh(2)                                            & 68.5                                                & 54.2                                & 65.0           & 239.5                         & 66.0              & 204.9     
\end{tabular}%
\end{table*}

\paragraph{PWL vs L: comparing areas.}
Figure~\ref{fig:area_comparison} is a direct comparison of bound \textit{tightness} between our piecewise linear bounds and the current state-of-the-art linear bounds~\cite{balunovic2019certifying}. For each image, linear and piecewise linear bounds are generated, each one capturing the reachable pixel values for a given transformation. We always use two piecewise segments ($q=2$) and use a Lipschitz error of 0.01 to compute bounds.
The area enclosed by each set of bounds is then calculated and averaged for every pixel over all images. In each case the piecewise linear bounds are guaranteed to be tighter (enclose a smaller area) than the linear bounds, as in Section~\ref{section:pw_linar_bounds}.
Figure~\ref{fig:area_comparison} shows the relative area
(specifically, $1 - (V_{\text{PWL}} / V_\text{L})$ with
$V_{\text{PWL}}$ and $V_{\text{L}}$ being the volume enclosed by the
piecewise linear and linear bounds, respectively) of the two bound
types. In Figure~\ref{fig:area_comparison}, there is an initial
increase in relative tightness for all transformations – this is a
result of linear bounds being unable to efficiently capture the
increasing nonlinearity in the pixel value curve,
$\mathcal{G}_{u,v}(\kappa)$. After an initial increase, the behaviour for different transformations diverges. For rotation, the relative advantage of the piecewise bounds continues to increase up to 15 degrees. For scaling, however, there is a peak at 1.25$\times$ magnification, followed by a decrease in the relative tightness. This result is explained by a corresponding increase in the complexity of the pixel value curve. 
Notably, the piecewise bounds are best suited to nonmonotonic pixel
value curves with a single, sharp vertex. For curves with many
vertices and large fluctuations, piecewise linear bounds become
increasingly linear (the gradient of the pieces converge) to maintain
convexity. Though this is the case for $q=2$, as we study here, for
larger numbers of piecewise segments the advantage over linear bounds
will continue to hold, as the piecewise bounds approximate the convex
hull of the pixel values for $q \rightarrow \infty$.
The plots for shearing and translation show a similar pattern to scaling. Although the relative tightness may decrease for larger transformations, the total bounded area increases, making any proportional reduction in area more significant.

\paragraph{PWL vs L: verification results.}
Table~\ref{table:exp} reports the experimental results obtained for verification queries using \texttt{VENUS}, on the VNN-COMP networks. For each type of input bound – piecewise linear and linear – the table shows the percentage of certified images (Verified column), the percentage of images for which a valid counter example was found (Falsified column), and the average verification time. 
We verify the robustness of each of the networks with respect to one of four transformations - rotation, scaling, shearing, or translation - on 50 randomly selected images from the MNIST test set. For each verification query we use a timeout of 30 minutes. 
We observe a considerable performance advantage using piecewise linear bounds for the convolution network, in every case, at least doubling the count of verifiable images.
For the 6-layer MLP network, many of the transformations tried could not be verified, leading to numerous counter examples and time-outs. However, for every transformation the piecewise linear bounds were able to find more counter examples than linear bounds – this is a result of the improved tightness of piecewise linear bounds. For the 2-layer MLP, results across the bound types are very similar, in some cases they are equal. This is due to two factors, both of which stem from the network’s small size. Firstly, the 2-layer network is the least robust of all three. Accordingly, our results are for very small transformations for which the pixel value curve is approximately linear. In these cases, linear bounds can capture the input set as well as piecewise linear bounds. Secondly, the advantage of piecewise linear bounds' tightness is compounded over each layer of a network – the 2-layer MLP is so small that this effect is minimal, further aligning the performance of the approaches.
Finally, the use of piecewise linear constraints result in a reduction of average verification times on both the 6-layer MLP and the convolutional network: this is due to the fact that their relative tightness compensates for the additional cost of their encoding, leading the employed MILP-based verifier to positive lower bounds on the verification problem \eqref{eq:nn_verification} in less time.

\paragraph{Comparison with literature results.}

In Table~\ref{tab:comparison deepg} we provide a comparison of verification results obtained using \texttt{VENUS} with both linear and piecewise linear constraints, with the DeepG~\cite{balunovic2019certifying} results, obtained using linear constraints and the DeepPoly~\cite{singh2019abstract} verifier, which relies on a relatively loose LP relaxation of~\eqref{eq:nn_verification}. Further, we use a MILP-based verifier which enabled us to add the pixel domain constraints in addition to our transformation-based bounds. This, coupled with the tighter verifier, enables our linear bounds to out-perform those from DeepG.
We consider MNIST and CIFAR10 benchmark presented in~\citet{balunovic2019certifying}. The MNIST example consists of verifying a 30 degree rotation transformation by way of 10, 3-degree sub-problems. 
This is in contrast to Table~\ref{table:exp}, where each perturbation is represented by a single set of bounds and a single verifier call per image. 
Table~\ref{tab:comparison deepg} shows that, even under the small-perturbation setting, the use of tighter verification algorithms (L versus DeepG) increases the number of verified properties.
Furthermore, we show that the method proposed in this work, PWL, leads to the tightest certification results. 
The CIFAR10 example comprises a composition of rotation and shearing of 2 degrees and 2\% respectively. This query is solved via 4 sub-problems (with each transformation domain split in half). The results show a 12\% improvement for the PWL bounds over the DeepG result. However, much of this gain comes from the verifier itself. The gap between the linear bounds and their piecewise counterpart is 1\%. We attribute this smaller gap to the relatively small domain over which each sub-problem runs.
Nevertheless, we point out that verifying perturbations through a series of sub-problems is extremely expensive, as it requires repeated calls to both neural network verifiers, and to the constraint-generation procedure (including the branch-and-bound-based Lipschitz optimisation).
For this reason, we focus on verification the setting without transformation splitting, and aim to maximize certifications through the use of tight verifiers and over-approximations of the geometric transforms.





\section{Conclusions} \label{section:conclusions}

We have introduced a new piecewise linear approximation method for
geometric robustness verification. Our approach can generate 
provably tighter convex relaxations for images obtained by geometric
transformations than the state-of-the-art  
methods~\cite{singh2019abstract,balunovic2019certifying}. Indeed, we have shown experimentally that
the proposed method can provide better verification precision in
certifying robustness against geometric transformations than prior
work~\cite{balunovic2019certifying}, while being more computational
efficient. 

Despite the positive results brought by our piecewise linear approximation method, further topics deserve further exploration. Firstly, it remains challenging to obtain the optimal piecewise linear constraints via~\eqref{eq:bound_sample_optimisation}. To get a good set of piecewise linear constraints, our current method~\eqref{eq:bound_sample_optimisation_individual} requires to obtain a good heuristic partition of the domain $\mathcal{B}_1, \ldots, \mathcal{B}_q$. It will be interesting to further investigate and quantify the suboptimality of the solution from~\eqref{eq:bound_sample_optimisation_individual}. Second, the number of piecewise linear segment $q$ is a hyperparameter in our framework. A larger value $q$ leads to a better approximation of the pixel value function in theory; however, this also results in more linear constraints for the verification problem in practice. Future work will investigate how to choose a good value of $q$ based on the curvature of of the pixel value function. 


\section*{Acknowledgements}
Ben Batten was funded by the UKRI Centre for Doctoral Training in Safe
and Trusted Artificial Intelligence. Alessandro De Palma was supported
by the “SAIF" project, funded by the “France 2030” government
investment plan managed by the French National Research Agency, under
the reference ANR-23-PEIA-0006. Alessio Lomuscio was supported by a
Royal Academy of Engineering Chair in Emerging Technologies.


\bibliography{references}

\newpage
\appendix
\section*{Appendix}
\section{Linear optimisation over sub-domains}

Our discussion below focuses on $q = 2$ in~\eqref{eq:bound_sample_optimisation}, and with this choice, we have already found promising improvements in our experiments (see the main text). With this constraint we can find suboptimal piecewise bounds by solving two independent linear optimisation problems, where each problem is applied over a subset of the piecewise domain, divided at a given sample point, $n$. We name the parameter sub-spaces divided by $n$, $\boldsymbol{\kappa}_{\mathfrak{1}}$, and $\boldsymbol{\kappa}_{\mathfrak{2}}$ where $\boldsymbol{\kappa}_{\mathfrak{1}}, \boldsymbol{\kappa}_{\mathfrak{2}} \subset \mathcal{B}$. Expressing~\eqref{eq:bound_sample_optimisation} in this way gives
\begin{subequations} \label{eq:pw_bound_sample_optimisation}
\begin{align}
    \min_{\underline{w}_1, \underline{b}_1} \quad & \frac{1}{N}\sum_{i=1}^n \left(\mathcal{G}_{u,v}(\kappa_i) - \{\underline{w}_1^\tr \kappa_i + \underline{b}_1\}\right)  \label{eq:pw_bound_sample_optimisation_1}\\
    \text{subject to} \quad &  \{\underline{w}_1^\tr \kappa_i + \underline{b}_1\} \leq \mathcal{G}_{u,v}(\kappa_i), \quad i = 1, \ldots, N \nonumber, \\
    \min_{\underline{w}_2, \underline{b}_2} \quad & \frac{1}{N}\sum_{i=n}^N \left(\mathcal{G}_{u,v}(\kappa_i) - \{\underline{w}_2^\tr \kappa_i + \underline{b}_2\}\right) \label{eq:pw_bound_sample_optimisation_2} \\
    \text{subject to} \quad &  \{\underline{w}_2^\tr \kappa_i + \underline{b}_2\} \leq \mathcal{G}_{u,v}(\kappa_i), \quad i = 1, \ldots, N. \nonumber
\end{align}
\end{subequations}

In~\eqref{eq:pw_bound_sample_optimisation_1} and~\eqref{eq:pw_bound_sample_optimisation_2} we optimise the area over over only the sample points within a piece's domain, $\boldsymbol{\kappa}_{\mathfrak{1}}$, or $\boldsymbol{\kappa}_{\mathfrak{2}}$; however, we enforce the constraints at every sample point. By doing this we guarantee convexity of our piecewise constraints. We develop a heuristic to determine the sample point, $n$, at which we split $\mathcal{B}$ based on the error between sampled points and optimal \textit{linear} bounds

\begin{equation}\label{eq:splitting_heuristic}
    \begin{aligned}
    \underline{n} = \max_{i = 1, \ldots, N} \left(\{\overline{w}^\tr \kappa_i + \overline{b}\} - \mathcal{G}_{u, v}(\kappa_i) \right),
    \end{aligned}
\end{equation}
where $\underline{n}$ is the splitting point for the lower bound. We calculate $\overline{n}$ correspondingly using the lower linear bound. There exists a splitting point, $n$, that would produce optimal piecewise bounds, but finding it is infeasible. In practice, we first compute a single linear bound for lower and upper constraints and then use this bound to compute the splitting point from~\ref{eq:splitting_heuristic}. Then, once the piecewise bound is obtained, half of the original linear bound is effectively discarded for the verification procedure. We compute the bounds in this way for two reasons: firstly, it enables us to apply our splitting heuristic in~\ref{eq:splitting_heuristic}, and secondly, it is computationally efficient in our experimental setting where we require the linear bounds for comparison.

\section{Details of branch-and-bound procedure}

With our unsound constraints, our method closely follows that of~\cite{balunovic2019certifying}, with the important exception that we treat our single piecewise bound as two, separate linear bounds with domains, $\boldsymbol{\kappa}_{\mathfrak{1}}$, and $\boldsymbol{\kappa}_{\mathfrak{2}}$. We first define a function, $\underline{f}_{u,v}$, to track the violation of a bound by the pixel value function, $\mathcal{G}_{u,v}$. In the case that the lower bound is piecewise, we will maximise $\underline{f}_{u,v}$ twice over $\boldsymbol{\kappa}_{\mathfrak{1}}$, and $\boldsymbol{\kappa}_{\mathfrak{2}}$, and $\overline{f}_{u,v}$ once over $\mathcal{B}$. Maximisation of $\underline{f}_{u,v}$ is done via a branch-and-bound Lipschitz procedure. Algorithm~\ref{lipschitz_optimisation} shows a simplified version of the implementation we use. For each instance of $\underline{f}_{u,v}(\kappa) \text{ where } \kappa \in \boldsymbol{\kappa}_{\mathfrak{1}}$, we first approximate the Lipshitz constant, $L_i$, and use it to bound $\underline{f}_{u,v}$
\begin{equation}
    \begin{centering}
    \underline{f}_{\text{bound}} = L_i \frac{\boldsymbol{\kappa}_\mathfrak{1}}{2} + \left(\underline{f}_{u,v}(\kappa_i) \right),
    \end{centering}
\end{equation}
where $\kappa_{i}$ is the midpoint of $\boldsymbol{\kappa}_{\mathfrak{1}}$. We find upper bound candidates by sampling the violation function$f_{u,v}(\kappa)$ at four, evenly spaced points in $\boldsymbol{\kappa}_{\mathfrak{1}}$; the largest valued obtained becomes the maximum value candidate, $\underline{f}_{\text{max}}$. We aim to find a maximum value-bound pair that satisfies $\underline{f}_{\text{bound}} - \underline{f}_{\text{max}} < \epsilon$, with $\epsilon$ given. This process is repeated until a satisfactory maximum candidate is found, splitting $\boldsymbol{\kappa}$ in each iteration. We can ignore any sub-domain, $\boldsymbol{\kappa}_{\mathfrak{1}}^n$, of $\boldsymbol{\kappa}_{\mathfrak{1}}$ where the function bound $\underline{f}_{\text{bound}}$ in $\boldsymbol{\kappa}_{\mathfrak{1}}^n$ is smaller than a maximum value candidate $\underline{f}_{\text{max}}$ in any other sub-domain. This is because we can guarantee that the maximum value, in this case, is not in the $\boldsymbol{\kappa}_{\mathfrak{1}}^n$ sub-domain. We deal only with 1-dimensional parameter spaces for which we split at the midpoint. The outline of this procedure is given in Algorithm~\ref{lipschitz_optimisation}.

\begin{algorithm}[tb]
\caption{Branch-and-bound Lipschitz Optimisation Procedure}
\label{lipschitz_optimisation}
\textbf{Input}: $\underline{f}_{u, v}$, $\mathcal{B}$, $\epsilon$, $n$, $N$\\
\textbf{Output}: $\xi^*$
\begin{algorithmic}[1] 
\STATE $\underline{f}_{\text{max}} := \max_{l=1, \ldots, n} \underline{f}_{u,v}(\kappa_l)$, where $\kappa_l \in \mathcal{B}$.
\STATE $\underline{f}_{\text{bound}} := \text{bound}(\underline{f}_{u,v},\nabla \underline{f}_{u,v}, \mathcal{B})$, where the operation ${\rm bound}(\cdot)$ refers to~\eqref{eq:lower_upper_xi}.
\STATE $\mathcal{L}:= [(\underline{f}_{\max}, \underline{f}_{\text{bound}}, \mathcal{B})]$.
\WHILE{$\underline{f}_{\text{bound}, i=1,\ldots,N}-\underline{f}_{{\max}, i=1,\ldots,N} > \epsilon$}
\FOR{$i \leftarrow 1$ \textbf{to} $N$}
\IF {$\underline{f}_{\text{bound}, i}-\underline{f}_{{\max}, i} > \epsilon$}
\STATE $\mathcal{B}_{i, i+N} = \text{split}(\mathcal{B}_i)$.
\STATE $\underline{f}_{{\max}, i} := \max_{l=1, \ldots, n} \underline{f}_{u,v}(\kappa_l)$, where $\kappa_l \in \mathcal{B}_i$.
\STATE $\underline{f}_{\text{bound}, i} := \text{bound}(\underline{f}_{u,v}, \nabla \underline{f}_{u,v}, \mathcal{B}_i)$.
\STATE $\mathcal{L}_i := [(\underline{f}_{{\max}, i}, \underline{f}_{\text{bound}, i}, \mathcal{B}_i)]$.
\ENDIF
\ENDFOR
\ENDWHILE
\STATE \textbf{return} $\xi^* = \max_{i=1, \ldots, N}\underline{f}_{{\max}, i}$.
\end{algorithmic}
\end{algorithm}

\end{document}